\documentclass{article}

\PassOptionsToPackage{numbers,compress}{natbib}
\usepackage[preprint]{neurips_2026}

\usepackage[utf8]{inputenc}
\usepackage[T1]{fontenc}
\usepackage{hyperref}
\hypersetup{hidelinks,hypertexnames=false}
\usepackage{url}
\usepackage{booktabs}
\usepackage{microtype}
\usepackage{xcolor}
\usepackage{amsmath,amssymb,amsfonts,amsthm,bm}
\usepackage{algorithm}
\usepackage{algpseudocode}
\usepackage{enumitem}
\usepackage{tikz}
\usetikzlibrary{arrows.meta,positioning,fit,calc,shapes.geometric}

\newtheorem{definition}{Definition}
\newtheorem{proposition}{Proposition}

\newcommand{\E}{\mathbb{E}}
\newcommand{\Prob}{\mathbb{P}}
\newcommand{\Wset}{\mathcal{W}}
\newcommand{\Ucal}{\mathcal{U}}

\title{SafeCommit: Certifying When Memory-Grounded Agents May Safely Act}

\author{
  Mayur Akewar\thanks{Mayur Akewar and Ravi Ranjan contributed equally to this work.} \\
  Florida International University \\
  \texttt{makew001@fiu.edu}
  \And
  Ravi Ranjan\footnotemark[1] \\
  Florida International University \\
  \texttt{rkuma031@fiu.edu}
}

\begin{document}
\maketitle

\begin{abstract}
Long-horizon agents increasingly use persistent memory and tools to take actions with external side effects. A central failure mode is premature commitment: an agent acts before resolving whether its memory grounding is stale, conflicting, incomplete, or corrupted. We formalize this problem as \emph{safe commitment under memory uncertainty} and introduce \textsc{SafeCommit}, a risk controlled layer between agent reasoning and external execution. The layer constructs a calibrated set of plausible latent worlds from memory, observations, tool outputs, provenance, and policy constraints. It permits a side-effectful action only when a conformal action certificate shows that the action is safe in every retained world. Otherwise, it selects a low-side-effect probe that targets the worlds blocking certification, or returns a conservative fallback. Under calibrated world coverage, the probability of an unsafe certified commit is at most the target level $\alpha$; with imperfect world proposal, the bound separates calibration and representation error. A dependency-free controlled simulator illustrates the safety-utility tradeoff and reproduces all reported results with one command. The goal is to offer a concrete approach for deciding not only \emph{what} an agent should do, but \emph{when the available evidence is sufficient to safely do it}. The source code and experimental artifacts are publicly available at \url{https://github.com/akewarmayur/SafeCommit}.
\end{abstract}

\section{Introduction}
\label{sec:intro}

Large language model (LLM) agents are moving from single-turn assistants toward systems that remember, reason, call tools, and act over extended interactions. ReAct established the value of interleaving reasoning and tool use \citep{yao2023react}, while Reflexion and newer memory systems use persistent feedback and organized memory to improve future behavior \citep{shinn2023reflexion,xu2025amem}. Benchmarks such as OSWorld and Mem2ActBench further show that long-horizon action depends on maintaining context across tools and changing environments \citep{xie2024osworld,shen2026mem2actbench}. Memory is therefore becoming part of an agent's control state rather than a passive store.

This shift creates a failure mode that stronger retrieval alone does not solve. A memory can be relevant and still be unsafe to act on: a file path may have been repurposed, a recipient may have changed, an earlier permission may have expired, or an injected record may imitate trusted history. Memory poisoning work demonstrates that malicious records can redirect downstream agent behavior \citep{chen2024agentpoison,dong2025minja}. In these cases, the agent may produce a plausible rationale while several safety-relevant interpretations of the current evidence remain unresolved.

Existing safeguards address adjacent but different questions. Clarification methods request missing information \citep{kobalczyk2025activetaskdisambiguation,suri2025structureduncertainty}; uncertainty-aware methods may abstain when confidence is low \citep{bonagiri2025checkyourself,zhang2026agenticuq}; access-control systems restrict which actions are permitted \citep{abaev2026agentguardian}; and effect-aware substrates expose or contain side effects through staging and sandboxing \citep{ruan2024toolemu,zhong2026yolofs}. These mechanisms are valuable, but none by itself supplies a decision rule for the following question: \emph{given memory, current observations, tool outputs, and policy constraints, is a proposed action safe under every still-plausible interpretation?}

We study this question as \emph{safe commitment under memory uncertainty}. At a decision step, the agent retains a calibrated set of plausible latent worlds. A world represents one safety-relevant interpretation, such as whether a record is current, a permission remains valid, or a retrieved memory is poisoned. A candidate action receives a certificate only if it is safe in all retained worlds. If no action is certifiable, the controller chooses a low-side-effect probefor example, a metadata read, permission check, staged diff, simulation, or clarification-that is expected to remove worlds blocking certification. If uncertainty cannot be resolved within budget, the controller defers, escalates, or abstains.

This perspective differs from assigning a scalar confidence to an action. The certificate is set-valued and counterfactual: it asks whether any retained world makes the action unsafe. It also connects uncertainty estimation to evidence acquisition because probes are valued by how much they shrink the \emph{uncertified region}, rather than by generic information gain.

\vspace{-0.2cm}
\paragraph{Positioning.}
Table~\ref{tab:positioning} summarizes the distinction. Memory and retrieval methods improve the evidence presented to an agent, but normally produce one working context. Selective-action methods decide whether a scalar confidence is high enough to act, but do not retain explicit safety-relevant alternatives. Access control checks whether an operation is permitted under a policy, which is necessary but does not resolve whether the underlying state or intent is stale. Effect-mediation systems make actions inspectable or reversible, but still require a rule for releasing the final effect. \textsc{SafeCommit} is designed as the decision layer that consumes signals from all four.

\begin{table}[t]
\centering
\small
\caption{Conceptual positioning. ``Worlds'' means explicit alternative safety-relevant states; ``targeted probe'' means evidence acquisition optimized for action certification.}
\label{tab:positioning}
\begin{tabular}{lcccc}
\toprule
\textbf{Approach} & \textbf{Main output} & \textbf{Worlds} & \textbf{Targeted probe} & \textbf{Risk target} \\
\midrule
Memory/retrieval & context &  &  &  \\
Uncertainty/quit & act or abstain &  & sometimes & scalar \\
Access control & allow or deny &  & policy check & policy \\
Effect mediation & staged effect &  & effect inspection & substrate \\
\textsc{SafeCommit} & commit/probe/fallback & \checkmark & \checkmark & $\alpha$ \\
\bottomrule
\end{tabular}
\end{table}

The distinction is especially important when an action is both permitted and well supported by one memory, yet unsafe in another plausible world. For example, an agent may be authorized to delete files in a workspace, but an old memory may point to a directory that has since become shared. Access control permits the deletion, retrieval surfaces the old instruction, and a confidence score may favor it. The remaining question is whether current evidence rules out the repurposed-directory world. \textsc{SafeCommit} makes that unresolved alternative explicit and blocks release until a metadata or staged-effect probe removes it.

Our contributions are:
\begin{enumerate}[leftmargin=*,itemsep=1pt,topsep=2pt]
    \item We formalize safe commitment for memory-grounded, side-effectful agents using calibrated plausible-world sets and action-level safety semantics.
    \item We introduce \textsc{SafeCommit}, a commitprobefallback controller with conformal action certificates, exact safety-equivalence compression, and an explicit decomposition of calibration and representation error.
    \item We provide a compact, dependency-free simulator and one-command experiment that isolates the mechanism across stale, conflicting, poisoned, and authorization-shifted memory. The experiment is intentionally a proof of concept, not a claim of full deployed-agent validation.
\end{enumerate}

\section{Problem Formulation}
\label{sec:formulation}

\subsection{Evidence, actions, and latent worlds}
At decision step $t$, the agent has local evidence
\begin{equation}
    x_t=(m_t,o_t,z_t,c_t),
\end{equation}
where $m_t$ is retrieved memory, $o_t$ is the current observation, $z_t$ contains recent tool outputs, and $c_t$ contains policy, authorization, or execution constraints. The history $h_t$ also includes prior probes and their outcomes. Let $\mathcal{A}_t$ be candidate side-effectful actions and $\mathcal{P}_t$ low-side-effect probes. Terminal decisions are
\begin{equation}
\mathcal{D}_t=\{\textsc{commit}(a):a\in\mathcal{A}_t\}\cup
\{\textsc{defer},\textsc{escalate},\textsc{abstain}\}.
\end{equation}
The underlying agent may propose actions and probes using an LLM. The controller treats these proposals as candidates rather than as evidence that they are safe.

Because the evidence may admit several interpretations, let $\Omega_t$ be a latent world space and $\omega_t^\star\in\Omega_t$ the true world. A constructor $C_\alpha$ maps the history to a plausible-world set
\begin{equation}
    \Wset_t=C_\alpha(h_t)\subseteq\Omega_t,
    \qquad \Prob(\omega_t^\star\in\Wset_t)\ge 1-\alpha.
    \label{eq:coverage}
\end{equation}
In practice, the constructor operates on a finite decision-relevant support $\widehat\Omega_t(h_t)$ or symbolic cells. Each candidate world receives a nonconformity score $s_t(\omega)$ based on provenance, age, consistency with current observations, tool agreement, and policy compatibility. With a split-conformal threshold $\kappa_\alpha$ estimated on held-out calibration cases,
\begin{equation}
\Wset_t=\{\omega\in\widehat\Omega_t(h_t):
\omega\text{ is hard-evidence consistent and }s_t(\omega)\le\kappa_\alpha\}.
\label{eq:worldset}
\end{equation}
This construction does not require enumerating every latent fact; it must preserve distinctions that can change action safety.

\subsection{Worked example}
Consider an agent asked to remove temporary outputs and send a completion note. Retrieved memory says that \texttt{/work/run/latest} is disposable and that \texttt{ops@example.org} is the approved recipient. Current evidence is incomplete: a directory listing shows recent files but not whether the path is now a symbolic link, and the recipient record has no recent provenance stamp. A single-state agent may infer that deletion and sending are both appropriate.

A decision-relevant support could instead contain three worlds. In $\omega_1$, the path remains temporary and the recipient is valid; both actions are safe. In $\omega_2$, the path was repurposed to shared results; deletion is unsafe although sending is safe. In $\omega_3$, the recipient memory was injected or superseded; sending is unsafe although a non-destructive file inspection is safe. If all three worlds survive calibration, neither deletion nor sending has an empty uncertified region.

The controller can now choose probes for the blocked decisions. A no-follow symbolic-link and ownership check distinguishes $\omega_1$ from $\omega_2$, while a directory or authorization lookup distinguishes $\omega_1$ from $\omega_3$. A generic question such as ``are you sure?'' may not resolve either fact. After the probes, the agent may certify the send action, replace deletion with a reversible staged move, or escalate if the recipient remains uncertain. The example illustrates why the framework represents only facts that can change action safety and why probes are tied to the action certificate rather than to broad uncertainty reduction.

\subsection{Action certificates}
Each world induces a set of safely admissible actions $\Gamma_t(\omega)\subseteq\mathcal{A}_t$. For action $a$, define the uncertified region
\begin{equation}
    \Ucal_t(a)=\{\omega\in\Wset_t:a\notin\Gamma_t(\omega)\}.
    \label{eq:uncertified}
\end{equation}

\begin{definition}[Conformal action certificate]
An action $a\in\mathcal{A}_t$ is $\alpha$-certified when $\Ucal_t(a)=\emptyset$.
\end{definition}
A plausible action may be safe in one likely world. A certified action remains safe in every world retained at the target coverage level.

\begin{proposition}[Unsafe-commit control]
\label{prop:risk}
Suppose the set at the stopping time $\tau$ satisfies Eq.~\eqref{eq:coverage}, and the controller commits only to actions with $\Ucal_\tau(a)=\emptyset$. Then
\begin{equation}
    \Prob\!\left(a_\tau\notin\Gamma_\tau(\omega_\tau^\star)\right)\le\alpha.
    \label{eq:risk}
\end{equation}
\end{proposition}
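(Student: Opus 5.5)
The argument is a direct event inclusion followed by the coverage guarantee, Eq.~\eqref{eq:coverage}, evaluated at the stopping time $\tau$. Let $E_{\mathrm{bad}}=\{a_\tau\notin\Gamma_\tau(\omega_\tau^\star)\}$ be the unsafe-commit event; it is defined only on runs that terminate in $\textsc{commit}(a_\tau)$. On runs that end in $\textsc{defer}$, $\textsc{escalate}$ or $\textsc{abstain}$, no side-effectful action is released, so they do not belong to $E_{\mathrm{bad}}$. The aim is to show $E_{\mathrm{bad}}\subseteq\{\omega_\tau^\star\notin\Wset_\tau\}$ and then take probabilities.

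\textbf{Event inclusion.} Fix a run on which the controller commits to $a_\tau$. By hypothesis, it commits only to actions with $\Ucal_\tau(a_\tau)=\emptyset$. Suppose also that $\omega_\tau^\star\in\Wset_\tau$. If $a_\tau\notin\Gamma_\tau(\omega_\tau^\star)$, then by Eq.~\eqref{eq:uncertified} we would have $\omega_\tau^\star\in\Ucal_\tau(a_\tau)$, which contradicts emptiness. Hence $a_\tau\in\Gamma_\tau(\omega_\tau^\star)$. Taking the contrapositive, every run in $E_{\mathrm{bad}}$ satisfies $\omega_\tau^\star\notin\Wset_\tau$. Therefore $\Prob(E_{\mathrm{bad}})\le\Prob(\omega_\tau^\star\notin\Wset_\tau)\le\alpha$, where the last step is the assumed coverage at time $\tau$.

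\textbf{Main obstacle.} The logic above is elementary. The delicate point is that coverage is needed at the \emph{random, data-dependent} time $\tau$, not at a fixed $t$. The controller keeps probing until something certifies, so $\tau$ is chosen adaptively, and per-step coverage does not automatically transfer to $\tau$. Conditioning on reaching certification can bias the event toward miscoverage, which is an optional-stopping or selection effect. The proposition sidesteps this by assuming that the set \emph{at the stopping time} satisfies Eq.~\eqref{eq:coverage}. In the proof I would state explicitly that this is the hypothesis being invoked, and that it is a marginal statement over the whole run, including the randomness of $\tau$ and of the probe outcomes.

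I would then remark on how the hypothesis can be ensured in practice. One route is to calibrate $\kappa_\alpha$ on calibration episodes drawn from the same stopped process. Another is a union bound over a bounded probe budget $T$, using level $\alpha/T$ at each step. I would also note a further subtlety: if one wants the stronger conditional claim $\Prob(E_{\mathrm{bad}}\mid\text{commit})\le\alpha$, the argument above does not give it. The argument controls only the joint probability of committing \emph{and} being unsafe, which is exactly what Eq.~\eqref{eq:risk} states.
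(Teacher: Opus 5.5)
Your proof is correct and matches the paper's argument: the paper also derives the inclusion $\{a_\tau\notin\Gamma_\tau(\omega_\tau^\star)\}\subseteq\{\omega_\tau^\star\notin\Wset_\tau\}$ from $\Ucal_\tau(a_\tau)=\emptyset$ and then applies the coverage hypothesis at $\tau$. Your remarks on the stopping-time hypothesis, and on the bound controlling the joint rather than the conditional probability, clarify what the paper assumes; they are not gaps.
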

\begin{proof}
If $\omega_\tau^\star\in\Wset_\tau$ and the uncertified region is empty, the committed action is safe in the true world. Therefore an unsafe commit is possible only when $\omega_\tau^\star\notin\Wset_\tau$, whose probability is at most $\alpha$.
\end{proof}

A finite proposal mechanism can miss the true world before calibration. Let $\beta=\Prob(\omega_t^\star\notin\widehat\Omega_t(h_t))$. If conditional conformal coverage is at least $1-\alpha$ whenever the true world is proposed, then a union argument gives the practical bound
\begin{equation}
    \Prob(\text{unsafe commit})\le \beta+(1-\beta)\alpha\le\alpha+\beta.
    \label{eq:representation}
\end{equation}
This separates miscalibration from representation failure instead of hiding both behind one confidence score.

\subsection{Safety-equivalence compression and probes}
Worlds can differ in irrelevant details while agreeing on action safety. Define the action signature
\begin{equation}
    \sigma_t(\omega)=\big(\mathbf{1}[a\in\Gamma_t(\omega)]\big)_{a\in\mathcal{A}_t}.
\end{equation}
Worlds with the same signature are safety-equivalent. Replacing each equivalence class by one representative preserves the certified action set exactly, because certification depends only on whether every retained signature marks an action safe.

When no action is certified, a probe $p\in\mathcal{P}_t$ produces outcome $y$ with cost $c(p)$ and updates the world set to $\Wset_{t+1}^{y,p}$. The controller seeks probes that reduce the worlds blocking certification, while keeping side effects and interaction cost low. This commitprobefallback decision is developed next.

\section{The \textsc{SafeCommit} Approach}
\label{sec:method}

\textsc{SafeCommit} is placed between an agent's memory/tool reasoning loop and external execution. It does not replace the base agent, access control, or sandboxing. Instead, it decides whether the evidence is sufficient to release a proposed action.

\begin{figure}[H]
\centering
\begin{tikzpicture}[
    font=\small,
    node distance=4mm and 4.5mm,
    box/.style={
        draw,
        rounded corners=2pt,
        align=center,
        minimum height=8mm,
        text width=0.27\linewidth,
        inner sep=4pt
    },
    wide/.style={
        draw,
        rounded corners=2pt,
        align=center,
        minimum height=8mm,
        text width=0.40\linewidth,
        inner sep=4pt
    },
    decision/.style={
        draw,
        diamond,
        aspect=2.2,
        align=center,
        inner sep=1.5pt,
        text width=0.20\linewidth
    },
    arr/.style={
        -{Latex[length=2mm]},
        thick
    },
    loop/.style={
        -{Latex[length=2mm]},
        thick,
        dashed,
        rounded corners=2pt
    }
]

% Top row
\node[wide,fill=blue!6] (evidence) {
    \textbf{1. Gather evidence}\\
    memory, observations, tool outputs, provenance, policy
};

\node[wide,fill=orange!8,right=8mm of evidence] (worlds) {
    \textbf{2. Construct plausible worlds}\\
    hard filtering $+$ calibrated nonconformity threshold
};

% Decision gate
\node[
    decision,
    fill=green!8,
    below=7mm of $(evidence)!0.5!(worlds)$
] (gate) {
    Safe in every retained world?
};

% Bottom row
\node[
    box,
    fill=green!10,
    below left=7mm and 12mm of gate
] (commit) {
    \textbf{3a. Commit}\\
    release the certified action
};

\node[
    box,
    fill=yellow!12,
    below=7mm of gate
] (probe) {
    \textbf{3b. Probe}\\
    clarify, inspect metadata, check permission, stage, or simulate
};

\node[
    box,
    fill=red!7,
    below right=7mm and 12mm of gate
] (fallback) {
    \textbf{3c. Fallback}\\
    defer, escalate, or abstain
};

% Main arrows
\draw[arr] (evidence) -- (worlds);
\draw[arr] (worlds) -- (gate);

\draw[arr]
    (gate) --
    node[left,font=\scriptsize]{yes}
    (commit);

\draw[arr]
    (gate) --
    node[right,font=\scriptsize]{no, useful probe}
    (probe);

\draw[arr]
    (gate) --
    node[right,font=\scriptsize]{no budget/evidence}
    (fallback);

% Feedback-loop coordinates:
% 1. Move below the probe box.
% 2. Move left of the commit box.
% 3. Move upward outside the commit box.
% 4. Enter the evidence box from its left side.
\coordinate (belowprobe) at ([yshift=-8mm]probe.south);
\coordinate (leftofcommit) at ($(commit.west)+(-10mm,0)$);
\coordinate (lowerleft) at (leftofcommit |- belowprobe);
\coordinate (upperleft) at (leftofcommit |- evidence.west);

\draw[loop]
    (probe.south)
    -- (belowprobe)
    -- (lowerleft)
    -- (upperleft)
    -- (evidence.west);

\end{tikzpicture}

\caption{
\textsc{SafeCommit} converts memory uncertainty into an explicit
commit-probe-fallback decision. A proposed action is released only
when no retained plausible world makes it unsafe; otherwise, a
low-side-effect probe updates the evidence and repeats the check.
}
\label{fig:approach}
\end{figure}

\subsection{Plausible-world construction}
The default constructor has four steps. First, it proposes a bounded support from retrieved memories, current observations, recent tool results, provenance fields, policy constraints, and domain rules. Second, it removes worlds that contradict hard evidence, such as impossible timestamps, invalid permissions, or mutually inconsistent tool outputs. Third, it applies Eq.~\eqref{eq:worldset} using a calibration-only threshold. Fourth, it merges worlds with identical action signatures. The LLM may help propose candidate worlds, but the retained set is constrained by structured evidence and policy predicates rather than by the model rationale alone.

The safety map $\Gamma_t$ is domain-specific. For a file action, it can encode protected paths, reversibility, and the effect revealed by a staged diff. For email, it can encode recipient authorization, attachment sensitivity, and whether sending is irreversible. For an administrative update, it can encode current permission and approval requirements. This interface makes the framework complementary to access-control and effect-mediation systems: their outputs become evidence or safety predicates used by the certificate.

\subsection{A concrete controller interface}
A practical implementation can keep the base agent and controller loosely coupled. The base agent returns a structured proposal containing candidate actions, candidate probes, and the evidence identifiers used in its rationale. An evidence adapter normalizes each item into a record with a proposition, source, timestamp, entity, tool status, and policy role. The world constructor expands unresolved conflicts into alternative assignments, while a domain verifier implements $\Gamma_t(\omega)$. This separation prevents a fluent rationale from serving as its own safety proof.

One simple nonconformity decomposition is
\begin{equation}
\begin{split}
 s_t(\omega)={}&\lambda_{\mathrm{prov}}\psi_{\mathrm{prov}}(\omega)
 +\lambda_{\mathrm{cons}}\psi_{\mathrm{cons}}(\omega)\\
 &+\lambda_{\mathrm{tool}}\psi_{\mathrm{tool}}(\omega)
 +\lambda_{\mathrm{policy}}\psi_{\mathrm{policy}}(\omega),
\end{split}
\label{eq:score}
\end{equation}
where the terms measure provenance or age, cross-evidence inconsistency, disagreement with current tool outputs, and policy incompatibility. The score can be rule-based, learned, or hybrid; the risk statement depends on calibrated coverage, not on a specific score family. Hard policy violations should be filtered before calibration rather than compensated by a low score elsewhere.

Calibration uses histories with known latent worlds or audited safety-relevant labels. For each calibration case, the constructor scores the true world, and $\kappa_\alpha$ is the standard finite-sample quantile. Prompt design, score weights, and probe costs are selected on separate development data. This separation is important: choosing $\kappa_\alpha$ on the test set would destroy the stated risk interpretation. In deployment, calibration should be refreshed when tool schemas, memory policies, or task distributions change, and coverage should be monitored by domain and action severity.

The controller should also log enough information to audit a decision: the retained safety-equivalence classes, the action signatures, the certificate or blocking worlds, the selected probe and its expected shrinkage, and the final fallback reason. These records are substantially smaller than full chain-of-thought traces and correspond directly to the safety decision. They also make it possible to diagnose whether a failure came from a missing world, a bad score, an incorrect safety map, or an unhelpful probe.

\subsection{Certificate gate and action choice}
For every candidate action, the controller computes Eq.~\eqref{eq:uncertified}. Only actions with an empty uncertified region enter the eligible set
\begin{equation}
    \mathcal{A}_t^{\mathrm{cert}}=
    \{a\in\mathcal{A}_t:\Ucal_t(a)=\emptyset\}.
\end{equation}
If several actions are certified, the controller may maximize task reward minus a side-effect severity penalty,
\begin{equation}
 a_t^\star\in\arg\max_{a\in\mathcal{A}_t^{\mathrm{cert}}}
 \big[R_t(a)-\lambda_{\mathrm{sev}}\,\mathrm{sev}_t(a)\big].
\end{equation}
This tie-breaking changes utility but not certificate validity because every eligible action is safe across the retained set.

\subsection{Certificate-shrinking probes}
If $\mathcal{A}_t^{\mathrm{cert}}$ is empty, the controller evaluates low-side-effect probes. Let $\mu(\cdot)$ be a monotone mass over worlds or safety-equivalence classes. For outcome $y$, define the post-probe uncertified region $\Ucal_{t+1}^{y,p}(a)$. The expected certificate shrinkage is
\begin{equation}
\Delta_t(a,p)=\E_y\!\left[\mu(\Ucal_t(a))-\mu(\Ucal_{t+1}^{y,p}(a))\right],
\end{equation}
and the default score is
\begin{equation}
    \Phi_t(p)=\frac{\max_{a\in\mathcal{A}_t}\Delta_t(a,p)}{c(p)+\epsilon}.
    \label{eq:probe}
\end{equation}
Unlike generic information gain, Eq.~\eqref{eq:probe} rewards information only when it removes worlds that block an action certificate. A file hash or staged diff may therefore outrank a broad clarification if it directly distinguishes a harmless write from a destructive one.

\begin{algorithm}[t]
\caption{Certified commitment under memory uncertainty}
\label{alg:safecommit}
\begin{algorithmic}[1]
\Require history $h_t$, actions $\mathcal{A}_t$, probes $\mathcal{P}_t$, risk $\alpha$, threshold $\eta$, budget $B$
\While{budget remains}
    \State construct and compress $\Wset_t=C_\alpha(h_t)$
    \State compute $\Ucal_t(a)$ for all $a\in\mathcal{A}_t$
    \If{some $a$ has $\Ucal_t(a)=\emptyset$}
        \State \Return \textsc{commit}$(a_t^\star)$
    \EndIf
    \State $p_t^\star\gets\arg\max_{p\in\mathcal{P}_t}\Phi_t(p)$
    \If{$\Phi_t(p_t^\star)<\eta$}
        \State \Return \textsc{defer}, \textsc{escalate}, or \textsc{abstain}
    \EndIf
    \State execute $p_t^\star$, observe $y$, and update $h_t$
\EndWhile
\State \Return conservative fallback
\end{algorithmic}
\end{algorithm}

\subsection{Effect-aware probing, fallback, and cost}
Probes can be observational, such as a metadata read or permission check, or effect-revealing, such as a dry run, staged diff, or sandboxed execution. Reversible mutations may be staged and inspected before final commitment. Irreversible actions, including sending a message, exposing a secret, or executing a nonrecoverable write, must be certified before execution.

The loop stops when an action is certified, the probe value falls below $\eta$, or a probe/time budget is exhausted. \textsc{defer} is appropriate when evidence may arrive later, \textsc{escalate} when external approval is required, and \textsc{abstain} when no safe action appears achievable. If $K_t$ safety-equivalence classes and $A_t$ actions remain, certificate evaluation costs $O(K_tA_t)$. Exact scoring of probes with $Y_p$ possible outcomes costs $O(\sum_p A_tK_tY_p)$; bounded supports and sampled outcomes make this practical for small decision-time sets.

\section{Reproducible Proof-of-Concept Evaluation}
\label{sec:evaluation}

The purpose of this evaluation is to test the mechanism in a setting where the latent world, safety map, and probe outcomes are exactly observable. It is not presented as evidence that the same magnitudes will hold for deployed LLM agents. Instead, it provides an executable reference point that can later be replaced with richer world constructors, tool environments, and LLM proposal policies.

\subsection{Controlled benchmark and protocol}
The artifact includes \textsc{SafeCommitBench-Controlled}, a frozen JSONL benchmark with train, validation, calibration, and test splits, plus the generator that created them. Each episode belongs to one of four memory-uncertainty families: stale records, conflicting records, poisoned records, or authorization drift. An episode contains three to six candidate worlds, one latent true world, three possible actions, a nonconformity score for every world, a finite proposed support, and deterministic binary metadata-style probes. Distractor worlds can support a different action and receive deceptively strong evidence according to the scenario family. In the main experiment, the true world is always included in the proposed support so that the experiment isolates conformal filtering and the commitprobefallback policy. Appendix~\ref{app:proofs}, \ref{app:implementation} and  \ref{app:representation_experiment} specifies the schema and studies proposal misses separately.

We use 10 independent seeds. Per seed, 2,000 episodes estimate the split-conformal threshold and 4,000 disjoint episodes are evaluated. The main setting uses $\alpha=0.05$ and at most two probes. We compare: \textbf{Single-world}, which acts on the lowest-nonconformity world; \textbf{Generic one-probe}, which checks a random retained world and then acts on the best remaining hypothesis; \textbf{Conflict abstention}, which falls back on close score conflicts; \textbf{SafeCommit (no probe)}, which certifies or falls back; and full \textbf{SafeCommit}, which selects probes by expected certificate shrinkage. All methods receive identical episodes and proposed support.

Metrics are unsafe commit rate (UCR), task success (TS), commit coverage (CC), fallback rate (FR), and mean probes. UCR is the fraction of all episodes ending in an unsafe action, matching the unconditional statement in Proposition~\ref{prop:risk}.

\subsection{Aggregate results}
\begin{table}[t]
\centering
\small
\caption{Controlled results at $\alpha=0.05$. UCR, TS, CC, and FR are percentages; lower is better for UCR, FR, and probes. Values are means over 10 seeds.}
\label{tab:main_results}
\begin{tabular}{lccccc}
\toprule
\textbf{Method} & \textbf{UCR$\downarrow$} & \textbf{TS$\uparrow$} & \textbf{CC$\uparrow$} & \textbf{FR$\downarrow$} & \textbf{Probes$\downarrow$} \\
\midrule
Single-world & 41.2 & 58.8 & 100.0 & 0.0 & 0.00 \\
Generic one-probe & 4.8 & 95.2 & 100.0 & 0.0 & 0.64 \\
Conflict abstention & 19.2 & 48.1 & 67.3 & 32.7 & 0.00 \\
\textsc{SafeCommit} (no probe) & 2.5 & 44.7 & 47.2 & 52.8 & 0.00 \\
\textbf{\textsc{SafeCommit}} & \textbf{2.6} & \textbf{97.4} & \textbf{100.0} & \textbf{0.0} & \textbf{0.55} \\
\bottomrule
\end{tabular}
\end{table}

Table~\ref{tab:main_results} illustrates three intended behaviors. First, acting on one plausible world is unsafe when memory can be misleading: Single-world commits unsafely in 41.2\% of episodes. Second, certification without probing is safe but conservative: \textsc{SafeCommit} (no probe) lowers UCR to 2.5\% but completes only 44.7\% of tasks. Third, targeted probes recover utility: full \textsc{SafeCommit} reaches 97.4\% task success with 2.6\% UCR and 0.55 probes per episode. Relative to Generic one-probe, targeted certification approximately halves unsafe commits while improving task success.

\subsection{Behavior across memory failures}
\begin{table*}[t]
\centering
\small
\setlength{\tabcolsep}{4.5pt}
\caption{Family-wise controlled results at $\alpha=0.05$. Each family is evaluated on 4,000 disjoint episodes per seed. Values are percentages averaged over 10 seeds.}
\label{tab:family_results}
\begin{tabular}{lcccccccc}
\toprule
 & \multicolumn{2}{c}{Stale} & \multicolumn{2}{c}{Conflict} & \multicolumn{2}{c}{Poisoned} & \multicolumn{2}{c}{Auth. drift} \\
\cmidrule(lr){2-3}\cmidrule(lr){4-5}\cmidrule(lr){6-7}\cmidrule(lr){8-9}
\textbf{Method} & UCR$\downarrow$ & TS$\uparrow$ & UCR$\downarrow$ & TS$\uparrow$ & UCR$\downarrow$ & TS$\uparrow$ & UCR$\downarrow$ & TS$\uparrow$ \\
\midrule
Single-world & 24.1 & 75.9 & 25.5 & 74.5 & 62.5 & 37.5 & 51.3 & 48.7 \\
Generic one-probe & 1.6 & 98.4 & 8.2 & 91.8 & 4.8 & 95.2 & 4.2 & 95.7 \\
\textbf{\textsc{SafeCommit}} & \textbf{1.2} & \textbf{98.8} & \textbf{1.2} & \textbf{98.8} & \textbf{3.9} & \textbf{96.1} & \textbf{3.5} & \textbf{96.5} \\
\bottomrule
\end{tabular}
\end{table*}

Table~\ref{tab:family_results} is the additional main-paper experiment. Single-world inference is particularly vulnerable to poisoned memory and authorization drift, where misleading evidence can appear stronger than the current state. A random probe improves all families but remains weak on conflicts because it can inspect a world unrelated to the action disagreement. \textsc{SafeCommit} keeps UCR between 1.2\% and 3.9\% across the four families while maintaining at least 96.1\% task success. The poisoned and authorization-shifted families remain the hardest because the true world receives a modest nonconformity penalty and deceptive alternatives are frequently ranked highly. This breakdown supports the intended role of provenance- and policy-aware world construction rather than claiming that the controlled family generator captures every real attack.

\subsection{Probe-budget ablation}
\begin{table}[t]
\centering
\small
\caption{Effect of the maximum probe budget at $\alpha=0.05$. A small active budget recovers most of the utility lost by certification-only fallback.}
\label{tab:probe_budget}
\begin{tabular}{ccccc}
\toprule
Probe budget & UCR$\downarrow$ & TS$\uparrow$ & CC$\uparrow$ & FR$\downarrow$ \\
\midrule
0 & 2.5\% & 44.7\% & 47.2\% & 52.8\% \\
1 & 2.6\% & 95.1\% & 97.6\% & 2.4\% \\
2 & 2.6\% & 97.4\% & 100.0\% & 0.0\% \\
4 & 2.6\% & 97.4\% & 100.0\% & 0.0\% \\
\bottomrule
\end{tabular}
\end{table}

The probe-budget ablation isolates the contribution of active evidence acquisition. With no probes, the certificate gate keeps UCR low but falls back on 52.8\% of episodes. One targeted probe raises task success from 44.7\% to 95.1\% and reduces fallback to 2.4\%, without materially increasing UCR. A second probe resolves the remaining multi-world conflicts, while increasing the budget from two to four produces no measurable benefit in this bounded benchmark. This saturation is specific to episodes containing only three to six worlds and deterministic binary probes, but it shows why certification and probing should be evaluated together: a certificate-only method may look safe mainly because it refuses to act, whereas a probe policy can recover coverage without relaxing the commit rule.

\subsection{Risk and probe tradeoff}
\begin{table}[t]
\centering
\small
\caption{Risk-level sweep for full \textsc{SafeCommit}. The calibrated target controls the retained set and changes both unsafe commitment and probe demand.}
\label{tab:risk_sweep}
\begin{tabular}{ccccc}
\toprule
Target $\alpha$ & UCR & Task success & Commit coverage & Mean probes \\
\midrule
1\% & 0.6\% & 99.4\% & 99.9\% & 0.68 \\
5\% & 2.6\% & 97.4\% & 100.0\% & 0.55 \\
10\% & 4.8\% & 95.2\% & 100.0\% & 0.50 \\
\bottomrule
\end{tabular}
\end{table}

At target levels $\alpha\in\{0.01,0.05,0.10\}$, full \textsc{SafeCommit} obtains UCRs of 0.6\%, 2.6\%, and 4.8\%, respectively (Table~\ref{tab:risk_sweep}). These values remain below the corresponding target levels because excluding the true world does not always produce an unsafe committed action. Increasing $\alpha$ retains fewer worlds, which reduces the average probe count from 0.68 to 0.50 but increases unsafe commitment. Thus, the risk level changes a measurable safetyinteraction frontier rather than acting only as a nominal confidence parameter.

\subsection{Scope}
The benchmark uses explicit latent worlds, deterministic probe outcomes, and hand-specified safety maps. Real systems must construct worlds from noisy unstructured evidence, estimate probe outcomes, and encode domain-specific safety semantics. Distribution shift can violate conformal exchangeability, and missing a safety-relevant world adds the representation term $\beta$ in Eq.~\eqref{eq:representation}. The results should therefore be read as a mechanism check and reproducible starting point. A complete empirical study would require realistic tool environments, audited labels, diverse LLM backends, and evaluation of world-proposal quality, latency, and adversarial adaptation.

\section{Design Implications and Open Questions}
\label{sec:discussion}

\paragraph{Decision-relevant completeness, not exhaustive world modeling.}
A common concern is that a plausible-world method must enumerate an intractable environment state. The certificate only requires distinctions that change whether a candidate action is safe. For a file deletion, file ownership, path resolution, protection status, and reversibility may matter, while unrelated file contents do not. Safety-equivalence compression formalizes this principle: two worlds can be merged whenever they induce the same action-safety signature. The practical objective for a world constructor is therefore not full factual reconstruction, but high recall over safety-relevant alternatives. Auditing should measure which unsafe signatures were omitted, not only whether a natural-language world description matched a reference answer.

\paragraph{The safety map is a first-class interface.}
The map $\Gamma_t(\omega)$ should be implemented independently of the model proposing actions. It may combine deterministic policy checks, typed preconditions, human approval requirements, effect simulators, and learned verifiers. This separation makes failure analysis clearer. If the true world was retained but an unsafe action was certified, the safety map is wrong. If a necessary world was absent, the constructor failed. If the correct worlds were retained but the system fell back unnecessarily, the probe policy or budget was inadequate. Treating these as distinct interfaces is more actionable than attributing every failure to general model uncertainty.

\paragraph{Sequential risk needs additional care.}
Proposition~\ref{prop:risk} applies to a commitment decision whose plausible-world set satisfies the coverage condition. A long-horizon agent may make many commitments, adapt its memory after each action, and select when to stop based on prior outcomes. Per-step calibration does not automatically imply a tight mission-level guarantee. Conservative deployments can allocate a total risk budget across commitments, use time-uniform conformal methods, or recertify after every state-changing action. Developing useful guarantees that account for adaptive probing, repeated commits, and correlated world-construction errors is an important extension.

\paragraph{Calibration must follow operational change.}
The exchangeability assumption can fail when tools, policies, users, or attack strategies change. A deployment should therefore monitor empirical coverage proxies and stratify calibration by action family, severity, and source type. A threshold calibrated for reversible file edits should not automatically govern secret access or financial transactions. When labels are delayed, staged execution and human review can provide conservative evidence for recalibration. The representation term $\beta$ is equally important: better calibration cannot repair a world that was never proposed.

\paragraph{Fallback is part of useful behavior.}
A safe system is not one that always abstains. \textsc{SafeCommit} makes the tradeoff visible through commit coverage, task success, probe cost, and fallback rate. The type of fallback should match the missing evidence. Deferral is appropriate when a fresh observation will arrive; escalation is appropriate when authority rather than information is missing; abstention is appropriate when no acceptable action exists. This distinction can also guide probe design: a permission check targets potential escalation, while a staged diff targets uncertainty about effects.

\paragraph{Research opportunities.}
Several questions remain open: how to generate compact worlds from unstructured memory without omitting rare hazards; how to learn probe-outcome models while preserving conservative guarantees; how to compose certificates across multi-agent plans; and how to evaluate representation error when the true latent world is only partially observable. Realistic studies should also compare decision-level certificates with scalar confidence, policy-only gating, and execution-only containment under matched tools and action budgets. The included simulator is intended to make these extensions easy to prototype while keeping the central commitment rule explicit.

\section{Discussion and Conclusion}
\label{sec:conclusion}

\textsc{SafeCommit} treats external action as a release decision rather than the automatic endpoint of agent reasoning. Its central rule is simple: commit only when the proposed action is safe across a calibrated set of plausible worlds; otherwise gather evidence targeted at the worlds blocking certification or return a conservative fallback. This separates plausible justification from certifiable action and gives unsafe commitment an explicit risk interpretation.

The framework is deliberately modular. Memory systems propose relevant state, tools expose current evidence, access-control systems contribute policy constraints, and staging or sandboxing mechanisms provide effect-revealing probes. \textsc{SafeCommit} connects these components at decision time. Its guarantee is only as strong as world coverage and the correctness of the safety map, so it should complement rather than replace memory security, authorization, human oversight, and execution containment.

The immediate contribution is an approach and a reproducible reference implementation. The next step is to test whether realistic world constructors and probe libraries can preserve the same safetyutility behavior in long-horizon computer-use, file, communication, and administrative agents.

\bibliographystyle{plainnat}
\bibliography{ref}

\clearpage
\appendix
\section*{Appendix}

\section{Proofs and Additional Concepts}
\label{app:proofs}

\subsection{Finite-sample calibration threshold}
Let the calibration set be $\mathcal{H}_{\mathrm{cal}}=\{(h_i,\omega_i^\star)\}_{i=1}^{n_{\mathrm{cal}}}$ and let $q_i=s_i(\omega_i^\star)$ be the nonconformity score assigned to the true world. For target risk $\alpha$, the implementation uses the split-conformal order statistic
\begin{equation}
\kappa_\alpha=q_{(r)},\qquad
r=\min\!\left\{n_{\mathrm{cal}},\left\lceil(n_{\mathrm{cal}}+1)(1-\alpha)\right\rceil\right\},
\label{eq:app_quantile}
\end{equation}
where $q_{(r)}$ is the $r$th smallest calibration score. Under exchangeability between calibration and test cases, Eq.~\eqref{eq:app_quantile} gives the marginal world-set coverage in Eq.~\eqref{eq:coverage}. Development choices such as prompt wording, score weights, probe costs, and support caps must be fixed before the calibration split is used.

\subsection{Full unsafe-commit proof}
\begin{proposition}[Unsafe-commit control, restated]
Assume $\Prob(\omega_\tau^\star\in\Wset_\tau)\ge 1-\alpha$ at the stopping time and that the controller commits only when $\Ucal_\tau(a_\tau)=\emptyset$. Then $\Prob(a_\tau\notin\Gamma_\tau(\omega_\tau^\star))\le\alpha$.
\end{proposition}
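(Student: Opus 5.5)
The plan is to prove the restated proposition by event inclusion followed by monotonicity of probability. There is no calibration content to establish here: coverage at the stopping time is an assumption. Let $E_{\mathrm{unsafe}}=\{a_\tau\notin\Gamma_\tau(\omega_\tau^\star)\}$ be the event of an unsafe commit and $E_{\mathrm{miss}}=\{\omega_\tau^\star\notin\Wset_\tau\}$ the event that the retained set misses the true world. Episodes that end in \textsc{defer}, \textsc{escalate}, or \textsc{abstain} commit no action. Following the convention behind the reported UCR, we treat these outcomes as safe, so $E_{\mathrm{unsafe}}$ is contained in the event that a commit occurs. The goal is the inclusion $E_{\mathrm{unsafe}}\subseteq E_{\mathrm{miss}}$. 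The proposition then follows from $\Prob(E_{\mathrm{unsafe}})\le\Prob(E_{\mathrm{miss}})\le\alpha$.

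To show the inclusion, I will argue by contrapositive on a fixed outcome. Suppose a commit occurs and $\omega_\tau^\star\in\Wset_\tau$. The controller commits only when $\Ucal_\tau(a_\tau)=\emptyset$. By Eq.~\eqref{eq:uncertified}, this means that no $\omega\in\Wset_\tau$ has $a_\tau\notin\Gamma_\tau(\omega)$. Taking $\omega=\omega_\tau^\star$ gives $a_\tau\in\Gamma_\tau(\omega_\tau^\star)$, so the outcome is not in $E_{\mathrm{unsafe}}$. I will also note that safety-equivalence compression does not affect this step. Compressed and uncompressed sets give the same certified set, because $\omega_\tau^\star$ has the same signature as its class representative, and that representative is retained whenever $\omega_\tau^\star$ is.

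The main obstacle is measure-theoretic: $\tau$ is a data-dependent stopping time reached after adaptive probing. I will not try to derive $\Prob(\omega_\tau^\star\in\Wset_\tau)\ge1-\alpha$ from per-step split-conformal coverage (Eq.~\eqref{eq:app_quantile}). Per-step coverage does not imply coverage at an adaptively chosen time, and the discussion of sequential risk explicitly flags this gap. Instead, I will use the coverage at $\tau$ exactly as the proposition assumes it. The only care required is that all quantities are evaluated at the same random index $\tau$, so the pointwise inclusion holds outcome by outcome. With that, the argument is complete. A brief remark will point out that if a proposal miss has probability $\beta$ and coverage holds only conditionally, the same inclusion combined with a union bound gives Eq.~\eqref{eq:representation}.
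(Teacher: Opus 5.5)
Your proposal is correct and matches the paper's proof: the certificate condition gives the pointwise inclusion $\{a_\tau\notin\Gamma_\tau(\omega_\tau^\star)\}\subseteq\{\omega_\tau^\star\notin\Wset_\tau\}$, and monotonicity plus the assumed coverage at $\tau$ finishes the argument. Your extra remarks are sound refinements of points the paper leaves implicit or defers to its discussion of sequential risk: counting fallback outcomes as safe, noting that compression does not change the argument, and not deriving coverage at the adaptive stopping time from per-step calibration.
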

\begin{proof}
The certificate condition implies
\begin{equation}
\Ucal_\tau(a_\tau)=\emptyset
\quad\Longrightarrow\quad
\forall\omega\in\Wset_\tau,\;a_\tau\in\Gamma_\tau(\omega).
\end{equation}
Consequently, whenever the true world belongs to the retained set, the committed action is safe in that true world. Therefore the unsafe event is contained in the coverage-failure event:
\begin{equation}
\{a_\tau\notin\Gamma_\tau(\omega_\tau^\star)\}
\subseteq
\{\omega_\tau^\star\notin\Wset_\tau\}.
\end{equation}
Taking probabilities and applying Eq.~\eqref{eq:coverage} completes the proof.
\end{proof}

\subsection{Exactness of safety-equivalence compression}
Define the certified action set over retained worlds as
\begin{equation}
\mathcal{C}(\Wset_t)=\{a\in\mathcal{A}_t:\forall\omega\in\Wset_t,\;a\in\Gamma_t(\omega)\}.
\end{equation}
Let $\omega\sim_t\omega'$ when the two worlds have the same action signature $\sigma_t(\omega)=\sigma_t(\omega')$, and let $\overline{\Wset}_t=\Wset_t/{\sim_t}$.

\begin{proposition}[Compression preserves certification]
For any retained set, $\mathcal{C}(\Wset_t)=\mathcal{C}(\overline{\Wset}_t)$ when each equivalence class is represented by any one of its members.
\end{proposition}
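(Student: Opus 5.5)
The plan is to show both inclusions by reducing certification to a statement about the set of action signatures that occur in the retained set. Fix a choice of representatives: for each class $[\omega]\in\overline{\Wset}_t$ pick one member $r([\omega])\in[\omega]$, and let $R=\{r([\omega]):[\omega]\in\overline{\Wset}_t\}\subseteq\Wset_t$. We then interpret $\mathcal{C}(\overline{\Wset}_t)$ as $\mathcal{C}(R)$, i.e.\ the same definition applied to the set of representatives. The key observation is that the membership indicator $\mathbf{1}[a\in\Gamma_t(\omega)]$ is exactly the $a$-th coordinate of $\sigma_t(\omega)$. Hence, for each fixed $a$, the predicate ``$a\in\Gamma_t(\omega)$'' is constant on every $\sim_t$-class.

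For the inclusion $\mathcal{C}(\Wset_t)\subseteq\mathcal{C}(R)$, we use that $R\subseteq\Wset_t$. If $a$ is safe in every $\omega\in\Wset_t$, then it is safe in every representative. This is just monotonicity of a universally quantified condition under shrinking the domain.

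For the reverse inclusion, take $a\in\mathcal{C}(R)$ and an arbitrary $\omega\in\Wset_t$. Its class $[\omega]$ lies in $\overline{\Wset}_t$, and the representative $r([\omega])\in R$ satisfies $\sigma_t(r([\omega]))=\sigma_t(\omega)$. Comparing the $a$-th coordinates gives $\mathbf{1}[a\in\Gamma_t(\omega)]=\mathbf{1}[a\in\Gamma_t(r([\omega]))]=1$. Therefore $a\in\Gamma_t(\omega)$ for every retained $\omega$, so $a\in\mathcal{C}(\Wset_t)$. Equivalently, the compression leaves every uncertified region $\Ucal_t(a)$ empty exactly when it was empty before. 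This gives the corollary that the eligible set $\mathcal{A}_t^{\mathrm{cert}}$, and hence the output of the certificate gate in Algorithm~\ref{alg:safecommit}, is unchanged.

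The only real obstacle is definitional rather than mathematical. The statement writes $\mathcal{C}$ applied to a quotient set, whose elements are classes rather than worlds, so one has to fix that $\Gamma_t$ is evaluated on a chosen representative. One also has to note that the result holds for any choice of representatives precisely because $\Gamma_t(\cdot)\cap\mathcal{A}_t$ is determined by $\sigma_t$. A minor point is that the signature must be taken over the same candidate action set $\mathcal{A}_t$ used in $\mathcal{C}$; if actions were added after compression, the equivalence would have to be recomputed. I would state this caveat explicitly.
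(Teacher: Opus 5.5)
Your proof is correct and follows the same approach as the paper's: for each fixed action, membership $a\in\Gamma_t(\omega)$ is a coordinate of the signature $\sigma_t(\omega)$, so it is constant on each $\sim_t$-class, and the universal condition defining $\mathcal{C}$ is therefore unchanged when each class is replaced by a representative. Your version only makes the paper's one-line argument explicit by proving the two inclusions separately, using $R\subseteq\Wset_t$ for one direction and equality of signatures with the chosen representative for the other.
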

\begin{proof}
Every member of an equivalence class has the same binary safety judgment for every candidate action. Replacing the class with one representative therefore leaves unchanged whether an action is marked safe by all retained worlds. The universal condition defining $\mathcal{C}$ is identical before and after replacement.
\end{proof}
This result motivates storing safety signatures rather than verbose latent descriptions once the world constructor has finished. Compression is exact for certification even if worlds differ in facts irrelevant to the current action set.

\subsection{Representation error}
Let $M_t=\{\omega_t^\star\notin\widehat\Omega_t(h_t)\}$ be the event that the proposal step misses the true world, with $\Prob(M_t)\le\beta$. Suppose that conditional on proposing the true world, calibration retains it with probability at least $1-\alpha$. Then
\begin{align}
\Prob(\omega_t^\star\notin\Wset_t)
&=\Prob(M_t)+\Prob(\omega_t^\star\notin\Wset_t,M_t^c)\\
&\le\beta+(1-\beta)\alpha\le\alpha+\beta.
\end{align}
Combining this result with the event inclusion in the previous proof yields Eq.~\eqref{eq:representation}. The decomposition is operationally useful: calibration data can estimate the $\alpha$ component, whereas support audits and oracle-world studies are needed to estimate or upper-bound $\beta$.

\subsection{Certificate variants}
The main paper uses a hard certificate because its safety interpretation is direct. Two extensions are possible. A \emph{soft certificate} permits residual weighted unsafe mass below a tolerance $\rho$,
\begin{equation}
\sum_{\omega\in\Wset_t}w_t(\omega)\mathbf{1}[a\notin\Gamma_t(\omega)]\le\rho,
\end{equation}
trading a weaker guarantee for higher commit coverage. A \emph{robust certificate} additionally requires the action signature to remain safe under a specified perturbation set over memory atoms, tool outputs, or support construction. These variants are not used in the reported controlled results.

\section{Implementation Details}
\label{app:implementation}

\subsection{World construction and evidence normalization}
A practical controller can normalize evidence into typed records with fields for proposition, entity, source, timestamp, tool status, and policy role. Unresolved contradictions become alternative assignments in the finite support. Hard inconsistencies and explicit policy violations are removed before scoring. The default score in Eq.~\eqref{eq:score} combines provenance, cross-evidence consistency, current-tool agreement, and policy compatibility. The simulator uses a scalar score sampled by family, but the certificate and calibration code do not depend on that particular generator.

\subsection{Probe outcome model}
For exact scoring, a probe $p$ has an outcome model $\widehat P_t(y\mid p)$ induced by the retained worlds. With normalized weights $w_t(\omega)$,
\begin{equation}
\widehat P_t(y\mid p)=\sum_{\omega\in\Wset_t}w_t(\omega)P(y\mid\omega,p).
\end{equation}
The released simulator uses deterministic identity-style probes: a probe asks whether one retained interpretation matches the hidden state. The full controller then chooses the probe maximizing expected reduction in minimum unsafe mass per unit cost. In a real system, analogous probes can be metadata reads, permission checks, dry runs, staged diffs, or sandboxed executions.

\subsection{Failure attribution}
The architecture separates four failure sources. A \emph{proposal failure} omits an important world. A \emph{calibration failure} scores the true proposed world above the retained threshold. A \emph{safety-map failure} marks an unsafe action safe in a retained world. A \emph{probe-policy failure} retains the correct worlds but spends budget on evidence that does not resolve the blocking action disagreement. The release artifact logs enough fields to diagnose the first, second, and fourth sources in the controlled setting; a deployed system must additionally audit its domain safety map.

\section{Additional Experiment: Representation Misses}
\label{app:representation_experiment}

The main evaluation sets $\beta=0$ to isolate conformal filtering and probing. This appendix experiment tests the separate representation term by omitting the true world from the proposed support with probability $\beta\in\{0,0.02,0.05,0.10\}$. Calibration remains unchanged and uses complete proposal support. The controller cannot recover an omitted world through calibration; probes can only distinguish among proposed alternatives. We use the same 10 seeds, $\alpha=0.05$, 2,000 calibration episodes per seed, and 4,000 test episodes per seed.

\begin{table}[h]
\centering
\small
\caption{Sensitivity to world-proposal misses. The empirical UCR remains below the loose $\alpha+\beta$ bound, while utility degrades as omitted worlds force incorrect certification.}
\label{tab:representation_sweep}
\begin{tabular}{ccccc}
\toprule
Observed $\beta$ & UCR$\downarrow$ & Bound $\alpha+\beta$ & TS$\uparrow$ & FR$\downarrow$ \\
\midrule
0.0\% & 2.6\% & 5.0\% & 97.4\% & 0.0\% \\
2.0\% & 3.9\% & 7.0\% & 96.1\% & 0.0\% \\
5.0\% & 6.2\% & 10.0\% & 93.8\% & 0.0\% \\
10.1\% & 10.1\% & 15.1\% & 89.9\% & 0.0\% \\
\bottomrule
\end{tabular}
\end{table}

Table~\ref{tab:representation_sweep} shows the expected separation. At observed $\beta=0$, UCR is 2.6\%, below the 5\% target. As the miss rate increases to approximately 10\%, UCR rises to 10.1\%, while the conservative bound rises to 15.1\%. Task success falls because the omitted true world can leave a wrong action unanimously supported by the remaining set. The experiment does not prove that real world constructors will have low $\beta$; rather, it demonstrates why proposal recall must be evaluated independently of calibration quality.

\section{Discussion: Toward Trustworthy and Adaptive Commitment}
\label{sec:discussion_trustworthy_commitment}

SAFECOMMIT provides a principled gate for deciding when memory-grounded agents may execute side-effectful actions under unresolved uncertainty. Its calibrated world sets and targeted probes could be strengthened by personalized feedback mechanisms that adapt clarification and escalation to specific users or operators~\cite{ranjan2026persa}. Privacy auditing is also important because persistent agent memory may retain sensitive information; gradient-induced membership analysis could help detect such exposure~\cite{ranjan2026g}. When stale, poisoned, or unsafe knowledge is identified, targeted unlearning approaches may remove the responsible representations while preserving useful agent capabilities~\cite{ranjan2026razor,ranjan2026vla}. Retrieval-grounded debiasing and structured mitigation can further improve world construction by reducing unsupported or biased evidence~\cite{ranjan2026catrag,ranjan2026position}. Together with explainability, domain trustworthiness, and edge-deployment safeguards~\cite{ranjan2026listening,kumar2024trustworthiness,grover2026embodied}, these directions could extend SAFECOMMIT from controlled certification toward auditable, adaptive, and deployment-ready agent safety.

\section{Limitations and Intended Use}
\label{app:limitations}

The controlled benchmark makes several strong assumptions: safety labels are exact, probes are deterministic, world supports are small, and the action set is fixed before certification. The conformal statement is marginal and assumes exchangeability between calibration and deployment cases. Repeated adaptive commitments require a mission-level risk allocation or a time-uniform treatment. The method also requires a domain safety map that is independent of the LLM proposal path; a wrong safety map can certify a harmful action even when the true world is retained.

Accordingly, the released code should be treated as a reference implementation and benchmark scaffold. It is suitable for testing alternative scoring rules, proposal mechanisms, probe policies, and support-compression schemes. It is not a production safety layer, does not replace authorization or sandboxing, and should not be used to justify high-impact actions without domain-specific validation and oversight.

\end{document}